\documentclass{article} 
\usepackage{iclr2027_conference,times}

\usepackage{amsmath,amsfonts,bm}

\def\eqref#1{equation~\ref{#1}}

\def\1{\bm{1}}

\DeclareMathAlphabet{\mathsfit}{\encodingdefault}{\sfdefault}{m}{sl}
\SetMathAlphabet{\mathsfit}{bold}{\encodingdefault}{\sfdefault}{bx}{n}

\usepackage{hyperref}
\usepackage[capitalize,noabbrev,nameinlink]{cleveref}
\usepackage{url}

\definecolor{teal}{rgb}{0.040, 0.470, 0.390}
\hypersetup{colorlinks, citecolor=teal, linkcolor=teal}

\title{Scaling Influence Functions in LLMs through Eigenbasis-Corrected One-Bit Gradient \\ Projection}

\author{Jaeseung Heo \\
POSTECH\\
\texttt{jsheo12304@postech.ac.kr} \\
\And
J Rosser \\
University of Oxford \\
\texttt{jrosser@robots.ox.ac.uk} \\
\And
Dongwoo Kim \\
POSTECH \\
\texttt{dongwookim@postech.ac.kr}
}

\usepackage{cleveref}
\usepackage{subcaption}
\usepackage{amsmath,amssymb,amsthm,mathtools}
\usepackage{xcolor,booktabs,tabularx,array}
\usepackage{algorithm,algpseudocode,float}
\usepackage{placeins}
\usepackage{wrapfig}

\newtheorem{proposition}{Proposition}
\crefname{appendix}{appendix}{appendices}
\Crefname{appendix}{Appendix}{Appendices}
\newcommand{\method}{EOGP}
\newcommand{\oinff}{\mathcal{I}}
\newcommand{\inff}{\hat{\mathcal{I}}}
\newcommand{\grad}{\nabla_{\theta}}

\newcommand{\thetastar}{\theta^*}
\newcommand{\Hlambda}{H_\lambda}
\newcommand{\traingrad}[1]{\grad{}\ell_{#1}(\thetastar)}
\newcommand{\querygrad}{\grad{}f}
\newcommand{\hwtrain}{\tilde{t}}
\newcommand{\hwquery}{\tilde{q}}
\newcommand{\hwsecond}{\Sigma}
\newcommand{\compressionerror}{\mathcal{E}}
\newcommand{\expectation}{\mathbb{E}}
\DeclareMathOperator{\tr}{tr}
\DeclareMathOperator{\rank}{rank}
\DeclareMathOperator{\range}{range}

\newcommand{\Hekfac}{\widehat{H}_\lambda}
\newcommand{\ektrain}{\hwtrain^{\mathrm{EK}}}

\newcommand{\methodrandom}{\method{}-R}

\newcommand{\sketch}{\mathsf{S}}

\definecolor{reviewink}{HTML}{825213}
\definecolor{reviewlight}{HTML}{FFF8E8}

\providecommand{\method}{EOGP}
\providecommand{\methodrandom}{\method{}-R}
\providecommand{\thetastar}{\theta^*}
\providecommand{\grad}{\nabla_\theta}
\providecommand{\Hekfac}{\widehat{H}_\lambda}
\providecommand{\hwtrain}{\tilde t}
\providecommand{\hwquery}{\tilde q}
\providecommand{\hwsecond}{\Sigma}
\providecommand{\compressionerror}{\mathcal E}
\providecommand{\expectation}{\mathbb E}
\providecommand{\tr}{\operatorname{tr}}
\providecommand{\rank}{\operatorname{rank}}
\providecommand{\range}{\operatorname{range}}
\providecommand{\sketch}{\mathsf S}

\iclrfinalcopy 
\begin{document}

\maketitle
\lhead{Preprint. Under review.}

\begin{abstract}
Influence functions estimate how individual training examples affect the behavior of large language models (LLMs). Analyzing how training data influence different behaviors of an LLM involves repeated influence computation. Reusing stored training gradients reduces the computational cost, but storing full gradients is prohibitively expensive at LLM scale. We study how to compress these gradients while preserving influence estimates for future queries that are unknown at storage time. Through a worst-case analysis, we characterize the optimal fixed-dimensional linear representation and propose eigenbasis-corrected one-bit gradient projection (\method{}) to approximate it at scale. Specifically, \method{} uses EK-FAC to reduce gradient dimensionality, then applies PCA within the retained subspace to learn compression directions from the training gradients. We then apply one-bit quantization to the resulting coordinates, allowing more coordinates to be retained within a fixed storage budget. On GPT-2, \method{} predicts retraining outcomes more accurately than the evaluated compression baselines while using one-sixteenth of their per-example storage. On OLMo 2 SFT models from 1B to 32B parameters, \method{} remains competitive with the baselines allocated over 100 times as much storage per example. 
\end{abstract}

\section{Introduction}
\label{sec:introduction}

Influence functions are widely used to analyze how training data shape the diverse behaviors of large language models (LLMs). Their applications include studying generalization~\citep{grosse2023studying,ruis2025procedural,kou2026data}, selecting data for fine-tuning~\citep{zhou2024hyperinf,heo2026interaction,chen2026influence}, and identifying training examples that contribute to undesired behavior~\citep{zhang2025correcting,coalson2026if}. By estimating how individual training examples affect a model's predictions~\citep{koh2017understanding}, influence functions provide a common framework for investigating these connections between training data and model behavior.

Each attribution query asks which training examples contribute to a particular model output or behavior. Analyzing different behaviors of a fixed model therefore involves multiple queries, but the training-example gradients used in influence estimation can be reused across them. Computing and storing these gradients once would avoid repeated backward passes over the training data as new queries arise. At LLM scale, however, storing full gradients is costly: a single training-example gradient for an 8B-parameter model requires 16~GB in half precision. Supporting repeated attribution queries efficiently thus calls for compact gradient representations that can be stored and reused.

Existing gradient-compression methods reduce this storage cost through random~\citep{schioppa2024efficient,choe2026your,hu2026grass} or curvature-informed projections~\citep{schioppa2022scaling,choe2026your}, representing each gradient with a small number of coordinates. The central challenge is deciding which information these coordinates should preserve. Since future attribution queries are not known when the representations are constructed, compression must retain information that remains useful across different subsequent influence calculations.

We address this challenge by defining a compression objective based on the discrepancy between influence scores computed from compressed and uncompressed gradients across possible future queries. Under a normalized worst-case influence-error criterion, we show that the top-$k$ PCA coordinates of half-whitened training gradients are optimal among $k$-dimensional linear representations. This result provides a concrete target for constructing reusable gradient representations.

To approximate this target at LLM scale, we propose Eigenbasis-corrected One-bit Gradient Projection (\method{}). Computing PCA directly on full gradients is prohibitively expensive for billion-parameter models. We therefore first reduce their dimensionality using projection directions derived from EK-FAC~\citep{george2018fast}, then apply PCA to the resulting representations to learn the final compression basis. This allows us to refine the representation using the training gradients within a computationally manageable space. We then apply one-bit quantization to the resulting coordinates, allowing more coordinates to be stored within a fixed budget. Together, these steps address which directions to retain and how much precision to allocate to each coordinate.

We evaluate the resulting representations through retraining-based validation and comparison with an uncompressed influence reference. On GPT-2~\citep{radford2019language}, \method{} outperforms the tested compression baselines in both linear datamodeling score (LDS)~\citep{park2023trak} and counterfactual retraining~\citep{bae2024training}. On OLMo~2 SFT models~\citep{olmo20242} from 1B to 32B parameters, \method{} more accurately reproduces the reference influence rankings at matched storage budgets. Even with less than 16~KB per example, it remains competitive with all tested compression baselines allocated over 100 times as much storage. These results show that compact, reusable gradient representations can retain useful attribution information at substantially lower storage cost.

\section{Preliminaries}

\subsection{Influence Functions}
\label{sec:influence}

Influence functions~\citep{koh2017understanding} estimate how a model's behavior would change if a training example were removed and the model retrained. 
Let $\theta\in\mathbb R^d$ denote the model parameters, $\thetastar\in\mathbb R^d$ their trained values, and $\ell_i:\mathbb R^d\to\mathbb R$ the loss function for training example $i$.
The corresponding \emph{training gradient} is $\traingrad{i}\in\mathbb R^d$, and $H\in\mathbb R^{d\times d}$ denotes the Hessian of the average training loss at $\thetastar$.

For a scalar target measurement $f:\mathbb R^d\to\mathbb R$, we define the \emph{query gradient} $\querygrad\in\mathbb R^d$ as its gradient with respect to $\theta$, evaluated at $\thetastar$. For example, a query about the loss on a target document $z$ uses $\querygrad=\grad{}\ell_{z}(\thetastar)$. The influence score of training example $i$ for this query is
\begin{equation}
\oinff{}(i) \coloneqq \querygrad^{\top}H^{-1}\traingrad{i}
\in\mathbb R.
\label{eq:if}
\end{equation}

For neural networks, the loss Hessian $H$ can be singular, so $H^{-1}$ may not exist. Prior work addresses this by using the generalized Gauss--Newton (GGN) matrix $G\in\mathbb R^{d\times d}$, with $G\succeq0$, together with damping~\citep{martens2020new,bae2022if,grosse2023studying}. For $\lambda>0$, we define $\Hlambda\coloneqq G+\lambda I_d\in\mathbb R^{d\times d}$, where $I_d$ is the $d\times d$ identity matrix. Since $\Hlambda\succ0$, the practical influence score is
\begin{equation}
\inff{}(i) \coloneqq \querygrad^{\top}\Hlambda^{-1}\traingrad{i}
\in\mathbb R.
\label{eq:practical_if}
\end{equation}

For a fixed model, the training gradient $\traingrad{i}$ does not depend on the query and can therefore be reused across queries. In this work, we aim to store these gradients so that subsequent influence calculations do not require repeated backward passes over the training data.

\subsection{Kronecker-Factored Curvature Approximation} \label{sec:ekfac}

\paragraph{GGN--Fisher equivalence.} Storing and factorizing the full GGN is impractical at LLM scale, so prior work has used structured approximations of the Fisher information matrix for scalable influence estimation~\citep{grosse2023studying}. For softmax cross-entropy on logits, the loss commonly used to train autoregressive LLMs, the GGN coincides with the Fisher $F$~\citep{martens2020new}. Writing $g(x,y)\coloneqq\nabla_\theta\ell(x,y;\thetastar)$, this matrix is 
\begin{equation} 
G=F=\expectation_{x\sim\mathcal{D}_X}\expectation_{\tilde y\sim p_{\thetastar}(\cdot\mid x)}\!\left[g(x,\tilde y)g(x,\tilde y)^{\top}\right], 
\label{eq:fisher_ggn} 
\end{equation} 
where $\mathcal{D}_X$ is the empirical distribution of training inputs. Thus, the curvature is a gradient second moment with labels drawn from the model's predictive distribution.

\paragraph{K-FAC approximation.} K-FAC~\citep{martens2015optimizing} exploits the layer structure of neural networks to approximate this Fisher matrix efficiently. In its block-diagonal form, it neglects cross-layer blocks and approximates each layer's block by a Kronecker product. For a linear layer, the two Kronecker factors are the second-moment matrices of its input activations $a$ and the loss gradients $\delta$ with respect to its outputs.
\begin{equation}
G_{\mathrm{layer}} = \expectation\!\left[
(aa^{\top})\otimes(\delta\delta^{\top})
\right] \approx A\otimes S, \qquad A\coloneqq\expectation[aa^{\top}], \quad S\coloneqq\expectation[\delta\delta^{\top}].
\label{eq:kfac_factorization}
\end{equation}
Let $Q_A$ and $Q_S$ be the orthonormal eigenbases of $A$ and $S$, respectively. Then $Q_A\otimes Q_S$ is an eigenbasis of the K-FAC approximation $A\otimes S$. This structure allows the basis to be obtained from two smaller eigendecompositions and applied through matrix multiplications without materializing the full basis matrix.

\paragraph{EK-FAC approximation.} EK-FAC~\citep{george2018fast,grosse2023studying} corrects the eigenvalues of the K-FAC approximation while retaining its block structure and layerwise eigenbases. It obtains the corrected values by expressing gradients in these bases and taking the mean square of each coordinate. With labels sampled as in \Cref{eq:fisher_ggn}, the resulting approximation is 
\begin{equation} 
G\approx Q\Lambda Q^{\top}, 
\label{eq:ekfac_eigendecompose} 
\end{equation} 
where $Q$ is block diagonal, with each block given by the corresponding layer's K-FAC eigenbasis $Q_A\otimes Q_S$, and $\Lambda$ is the diagonal matrix of corrected eigenvalues.

\section{Method}
\label{sec:method}

This section presents Eigenbasis-corrected One-bit Gradient Projection (\method{}), which stores compact representations of training gradients for reuse in influence estimation. We first formulate gradient compression as minimizing a normalized worst-case influence error and show that storing the top-$k$ PCA coordinates of half-whitened training gradients is optimal among $k$-dimensional linear representations (\S\ref{sec:objective}). We then approximate this target using a projection onto the EK-FAC eigenbasis followed by PCA within the retained subspace (\S\ref{sec:two_stage}). Finally, we store the resulting training coordinates as one sign bit each, together with a scale per module (\S\ref{sec:onebit}). 
\Cref{fig:pipeline} illustrates the pipeline, and \Cref{app:algorithm} gives the full procedure.

\begin{figure}[t]
\centering
\includegraphics[width=\textwidth]{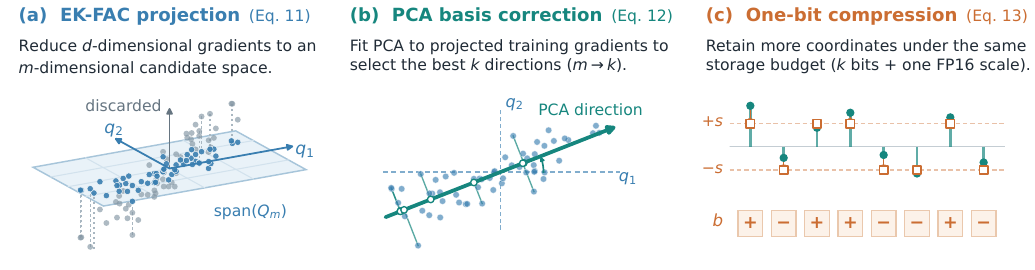}
\caption{Overview of \method{}. (a) Training gradients are projected onto an EK-FAC subspace and half-whitened. (b) PCA corrects the compression basis within this subspace. (c) The resulting coordinates are stored as one sign bit per coordinate together with a shared scale. Query gradients undergo the same transformations without one-bit quantization, and influence scores are computed as scaled inner products with the stored bits.}
\label{fig:pipeline}
\end{figure}

\subsection{What to Store: PCA of Half-Whitened Gradients}
\label{sec:objective}

\paragraph{Compression objective.} We first ask which $k$-dimensional linear representation best preserves the influence scores in \Cref{eq:practical_if} when queries are unknown at storage time. We consider linear compression methods that store $V^{\top}\traingrad{i}\in\mathbb{R}^{k}$ per training example, where $V\in\mathbb{R}^{d\times k}$ defines the compression. A reconstruction matrix $M\in\mathbb{R}^{d\times k}$ maps these coordinates back to parameter space, giving the influence estimate $\querygrad^{\top}MV^{\top}\traingrad{i}$.

We evaluate compression by the discrepancy between the uncompressed influence score and the estimate obtained from the stored coordinates. Since this discrepancy scales linearly with the query gradient, we constrain query magnitude to obtain a finite worst-case error.

The influence score in \Cref{eq:practical_if} weights the interaction between the query and training gradients by $\Hlambda^{-1}$, giving greater weight to components along directions of lower curvature. The standard Euclidean norm treats all directions equally and therefore does not capture this directional weighting. To account for how query components enter the influence computation, we use the $\Hlambda^{-1}$-weighted Euclidean norm and define its unit ball $\mathcal B$ as
\begin{align}
\|q\|_{\Hlambda^{-1}}
\coloneqq \sqrt{q^{\top}\Hlambda^{-1}q}, \quad\quad
\mathcal B
\coloneqq
\left\{q\in\mathbb R^d:
\|q\|_{\Hlambda^{-1}}\le 1
\right\}.    
\label{eq:ball}
\end{align}
Our objective is the worst-case squared influence error over query gradients in $\mathcal B$, averaged over training examples:
\begin{equation}
\compressionerror(V,M)
\coloneqq
\expectation_i\Big[
\sup_{\querygrad\in\mathcal B}
\big|
\querygrad^{\top}
\big(\Hlambda^{-1}-MV^{\top}\big)
\traingrad{i}
\big|^2
\Big].
\label{eq:criterion}
\end{equation}
This criterion allows us to optimize the stored representation without knowing which attribution queries will be evaluated later.

\paragraph{Optimal linear representation.} To characterize the optimal representation, we define the half-whitened training and query gradients $\hwtrain_i \coloneqq \Hlambda^{-1/2}\traingrad{i}$ and $\hwquery \coloneqq \Hlambda^{-1/2}\querygrad$. Influence is then the inner product $\langle \hwquery,\hwtrain_i\rangle$.

\begin{proposition}[Optimality of half-whitened PCA]
\label{prop:optimality}
Let $1 \le k \le d$, and let the columns of $U \in \mathbb{R}^{d \times k}$ be orthonormal eigenvectors corresponding to the $k$ largest eigenvalues of the uncentered second moment $\hwsecond \coloneqq \expectation_i[\hwtrain_i\hwtrain_i^{\top}]$. Then \Cref{eq:criterion} is minimized by $V=M=\Hlambda^{-1/2}U$, i.e., by storing $U^{\top}\hwtrain_i$ and estimating influence as its inner product with $U^{\top}\hwquery$.
\end{proposition}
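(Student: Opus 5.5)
The plan is to reduce the criterion in \Cref{eq:criterion} to a low-rank matrix approximation problem and then invoke the Eckart--Young--Mirsky theorem. First, I will evaluate the inner supremum in closed form. Write $q=\Hlambda^{1/2}p$, so that $q\in\mathcal B$ exactly when $\|p\|_2\le 1$. Then
\[
\querygrad^{\top}\big(\Hlambda^{-1}-MV^{\top}\big)\traingrad{i}=p^{\top}\Hlambda^{1/2}\big(\Hlambda^{-1}-MV^{\top}\big)\Hlambda^{1/2}\hwtrain_i .
\]
By Cauchy--Schwarz, the supremum of its square over the unit ball equals $\|(I_d-\Hlambda^{1/2}MV^{\top}\Hlambda^{1/2})\hwtrain_i\|_2^2$. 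Define $B\coloneqq \Hlambda^{1/2}MV^{\top}\Hlambda^{1/2}$. Averaging over $i$ then gives
\[
\compressionerror(V,M)=\expectation_i\|(I_d-B)\hwtrain_i\|_2^2=\tr\big((I_d-B)\hwsecond(I_d-B)^{\top}\big)=\|(I_d-B)\hwsecond^{1/2}\|_F^2 .
\]

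Second, I will characterize the feasible set. Since $\Hlambda^{1/2}$ is invertible, $B$ ranges over exactly the $d\times d$ matrices of rank at most $k$ as $(V,M)$ vary over $\mathbb R^{d\times k}\times\mathbb R^{d\times k}$. One direction is immediate, because $MV^{\top}$ has rank at most $k$. For the other, any rank-$\le k$ matrix factors as $B=LR^{\top}$ with $L,R\in\mathbb R^{d\times k}$, and we can take $M=\Hlambda^{-1/2}L$ and $V=\Hlambda^{-1/2}R$. The problem is therefore $\min_{\rank B\le k}\|\hwsecond^{1/2}-B\hwsecond^{1/2}\|_F^2$.

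Third, I will lower bound the error and show the proposed choice attains it. The matrix $B\hwsecond^{1/2}$ has rank at most $k$, so Eckart--Young--Mirsky gives $\|\hwsecond^{1/2}-B\hwsecond^{1/2}\|_F^2\ge\sum_{j>k}\sigma_j(\hwsecond^{1/2})^2=\sum_{j>k}\mu_j$, where $\mu_1\ge\cdots\ge\mu_d$ are the eigenvalues of $\hwsecond$. For $V=M=\Hlambda^{-1/2}U$ we get $B=\Hlambda^{1/2}\Hlambda^{-1/2}UU^{\top}\Hlambda^{-1/2}\Hlambda^{1/2}=UU^{\top}$, the orthogonal projector onto the top-$k$ eigenspace. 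Then $(I_d-UU^{\top})\hwsecond^{1/2}$ has squared Frobenius norm exactly $\sum_{j>k}\mu_j$, so the bound is attained.

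The interpretive claim follows directly: $V^{\top}\traingrad{i}=U^{\top}\hwtrain_i$, and $\querygrad^{\top}MV^{\top}\traingrad{i}=(U^{\top}\hwquery)^{\top}(U^{\top}\hwtrain_i)$.

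The main obstacle is the sup computation: I need the change of variables to map $\mathcal B$ exactly onto the Euclidean unit ball, and I need the algebra to produce the symmetric conjugation $\Hlambda^{1/2}(\cdot)\Hlambda^{1/2}$. Beyond that, I only have to check that the relaxation to arbitrary rank-$\le k$ matrices $B$ loses nothing, which the factorization argument in the second step covers.
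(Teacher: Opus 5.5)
Your proof is correct. The reduction matches the paper's step for step:
\begin{itemize}
\item the same change of variables $\querygrad=\Hlambda^{1/2}\hwquery$, mapping $\mathcal B$ onto the Euclidean unit ball;
\item the same Cauchy--Schwarz evaluation of the supremum;
\item the same matrix $B=\Hlambda^{1/2}MV^{\top}\Hlambda^{1/2}$ (called $R$ in the paper);
\item the same factorization argument showing that every rank-$\le k$ matrix is realized by some $(V,M)$.
\end{itemize}

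You diverge only in the lower bound. You rewrite the objective as $\|(I_d-B)\hwsecond^{1/2}\|_F^2$ and apply Eckart--Young--Mirsky to $\hwsecond^{1/2}$, with $B\hwsecond^{1/2}$ as the rank-$\le k$ competitor. This handles non-projector $B$ in one step.

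The paper argues in two stages. First, a Pythagorean split shows that replacing $R$ by the orthogonal projector $\Pi_{\mathcal S}$ onto $\range(R)$ never increases the error. Second, it proves $\tr(\Pi_{\mathcal S}\hwsecond)\le\sum_{j\le k}\mu_j$ directly, using weights $\alpha_j=\|\Pi_{\mathcal S}u_j\|_2^2\in[0,1]$ with $\sum_j\alpha_j\le k$. The paper cites Eckart--Young for this step but does not rely on it.

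Your version is shorter but uses Eckart--Young--Mirsky as a black box. The paper's version is self-contained and isolates the projector form of the problem, $\min_{\Pi}\expectation_i\|(I-\Pi)\hwtrain_i\|_2^2$. That form is exactly what the paper reuses in the proof of \Cref{prop:rayleigh_ritz}, where the candidate subspaces are parameterized by orthonormal $W$.
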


Thus, storing the top-$k$ PCA coordinates of half-whitened training gradients minimizes influence-estimation error under \Cref{eq:criterion}. The proof is provided in \Cref{app:optimality}.

\subsection{Two-Stage Projection: PCA within an EK-FAC Subspace}

Performing PCA directly on half-whitened training gradients is impractical at LLM scale,
so we approximate it in two stages. The first stage uses the gradient statistics provided by EK-FAC to construct an $m$-dimensional candidate subspace. The second stage applies PCA to the training gradients projected into this subspace to select the final $k$ directions, where $m>k$.

We apply both stages independently to each linear layer (module). Throughout this subsection, we suppress module indices and use $d$, $m$, and $k$ for the corresponding per-module dimensions.

\paragraph{First stage: subspace selection with EK-FAC.}
Let $C\coloneqq\expectation_i[\traingrad{i}\traingrad{i}^{\top}]\in\mathbb{R}^{d\times d}$ denote the uncentered second moment of the training gradients, also known as the empirical Fisher~\citep{martens2020new}. Applying the PCA criterion of \Cref{prop:optimality} within each module requires the leading eigenvectors of
\begin{equation}
\hwsecond
=\Hlambda^{-1/2}C\Hlambda^{-1/2}.
\label{eq:sigma_exact}
\end{equation}

To construct a tractable surrogate for $\hwsecond$, we approximate both the half-whitening operator and the training-gradient second moment. We begin with the half-whitening operator, which depends on the curvature matrix $G$. For softmax cross-entropy, $G$ coincides with the true Fisher and can therefore be approximated using EK-FAC, as reviewed in \Cref{sec:ekfac}. Writing $G\approx Q\Lambda Q^{\top}$, where $\Lambda=\mathrm{diag}(\lambda_1,\ldots,\lambda_d)$ contains the corrected eigenvalues, gives
\begin{equation}
\Hlambda^{-1/2}
=(G+\lambda I)^{-1/2}
\approx Q(\Lambda+\lambda I)^{-1/2}Q^{\top}
\eqqcolon\Hekfac^{-1/2}.
\label{eq:ekfac_half_whitening}
\end{equation}

For candidate subspace selection, we additionally use $Q\Lambda Q^{\top}$ as a surrogate for $C$. 
This is a separate approximation: although the Fisher and the empirical Fisher $C$ are both gradient second moments, the Fisher averages over the model's predictive label distribution for each input, whereas $C$ uses the corresponding training label.
Combining this surrogate with the approximate half-whitening in \Cref{eq:ekfac_half_whitening} yields
\begin{equation}
\begin{aligned}
\Sigma_{\mathrm{proxy}}
&\coloneqq
\Hekfac^{-1/2}Q\Lambda Q^{\top}\Hekfac^{-1/2} \\
&=
Q\,\mathrm{diag}\!\left(
\frac{\lambda_1}{\lambda_1+\lambda},
\ldots,
\frac{\lambda_d}{\lambda_d+\lambda}
\right)Q^{\top}.
\end{aligned}
\label{eq:sigma_ekfac}
\end{equation}
Since $\lambda_j/(\lambda_j+\lambda)$ increases with $\lambda_j$ for $\lambda>0$, the $m$ EK-FAC eigenvectors with the largest corrected eigenvalues form a top-$m$ PCA basis for $\Sigma_{\mathrm{proxy}}$. We can therefore construct this candidate subspace directly from the existing EK-FAC statistics, without separately estimating $C$ or performing a full-space eigendecomposition.

Let $Q_m\in\mathbb R^{d\times m}$ contain these eigenvectors and $\Lambda_m\in\mathbb R^{m\times m}$ be the diagonal matrix of their corrected eigenvalues.
Projecting the EK-FAC half-whitened training gradients onto this basis gives the first-stage coordinates:
\begin{equation}
\begin{aligned}
\hwtrain_i^{(1)}
&\coloneqq
Q_m^{\top}\Hekfac^{-1/2}\traingrad{i} \\
&=
(\Lambda_m+\lambda I)^{-1/2}
Q_m^{\top}\traingrad{i}
\in\mathbb R^m.
\end{aligned}
\label{eq:first_stage}
\end{equation}
The Kronecker factorization $Q=Q_A\otimes Q_S$ allows these coordinates to be computed through matrix multiplications with the two smaller factors, without forming a dense $d\times m$ projection matrix.

\paragraph{Second stage: subspace refinement with PCA.}
\label{sec:two_stage}
The EK-FAC basis is inherited from the K-FAC approximation $A\otimes S$, which uses separate second moments of activations and backpropagated gradients. Although EK-FAC corrects the diagonal second moments in this basis, it keeps the basis fixed and leaves off-diagonal terms unmodeled. Consequently, the retained axes need not align with the principal directions of the half-whitened training gradients.

To account for the second-moment structure among the retained coordinates, the first stage keeps $m>k$ EK-FAC axes. We then find the optimal $k$-dimensional subspace within their span by applying PCA to the full second moment of the first-stage coordinates,
$
\hwsecond^{(1)}
\coloneqq
\expectation_i\!\left[
\hwtrain_i^{(1)}(\hwtrain_i^{(1)})^\top
\right],
$
including its off-diagonal entries.

Let $P\in\mathbb R^{m\times k}$ contain the orthonormal eigenvectors of $\hwsecond^{(1)}$ associated with its $k$ largest eigenvalues. Combining this PCA projection with the first-stage transformation gives the final coordinates:
\begin{equation}
\begin{aligned}
\hwtrain_i^{(2)}
&\coloneqq P^\top\hwtrain_i^{(1)} \\
&=(Q_mP)^\top\Hekfac^{-1/2}\traingrad{i}
\in\mathbb R^k.
\end{aligned}
\label{eq:second_stage}
\end{equation}

The following proposition shows that this correction yields a reconstruction error less than or equal to that obtained using the leading $k$ EK-FAC axes.

\begin{proposition}[Eigenbasis correction]
\label{prop:rayleigh_ritz}
Let $\ektrain_i\coloneqq\Hekfac^{-1/2}\traingrad{i}$ denote the EK-FAC half-whitened training gradients, $U_\star\coloneqq Q_mP$ the corrected basis, and $Q_k$ the basis formed by the leading $k$ EK-FAC axes retained in $Q_m$. Projecting these gradients onto $\operatorname{span}(U_\star)$ has the following guarantees:

\begingroup
\setlength{\leftmargini}{2em}
\setlength{\topsep}{2pt}
\setlength{\partopsep}{0pt}
\setlength{\itemsep}{2pt}
\setlength{\parsep}{0pt}
\setlength{\parskip}{0pt}
\renewcommand{\labelenumi}{\textup{(\roman{enumi})}}
\begin{enumerate}

\item The mean squared reconstruction error is less than or equal to that obtained by projecting onto the leading $k$ EK-FAC axes. Formally,
\[
\expectation_i\!\left[\|\ektrain_i-U_\star U_\star^{\top}\ektrain_i\|_2^2\right]
\le
\expectation_i\!\left[\|\ektrain_i-Q_kQ_k^{\top}\ektrain_i\|_2^2\right].
\]
In fact, it is minimal among all $k$-dimensional subspaces of $\operatorname{span}(Q_m)$.

\item If $\operatorname{span}(Q_m)$ contains a top-$k$ principal subspace of $\ektrain_i$, the error equals the minimum attained by full-space $k$-dimensional PCA within that module.

\end{enumerate}
\endgroup
\end{proposition}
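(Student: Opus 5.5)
The plan is to reduce both parts to the standard variational characterization of PCA (Ky Fan's maximum principle), applied inside the $m$-dimensional coordinate system given by $Q_m$. Write $\Sigma^{\mathrm{EK}}\coloneqq\expectation_i[\ektrain_i(\ektrain_i)^\top]$. For any $d\times k$ matrix $W$ with orthonormal columns, expanding the square and using $W^\top W=I_k$ gives
\[
\expectation_i\!\left[\|\ektrain_i-WW^\top\ektrain_i\|_2^2\right]=\tr(\Sigma^{\mathrm{EK}})-\tr\!\left(W^\top\Sigma^{\mathrm{EK}}W\right).
\]
Minimizing the reconstruction error is therefore the same as maximizing $\tr(W^\top\Sigma^{\mathrm{EK}}W)$. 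Note that $U_\star=Q_mP$ has orthonormal columns, since $Q_m^\top Q_m=I_m$ and $P^\top P=I_k$. Also, any $W$ with $\operatorname{span}(W)\subseteq\operatorname{span}(Q_m)$ can be written as $W=Q_mR$ with $R\in\mathbb R^{m\times k}$ and $R^\top R=I_k$.

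\textbf{Part (i).} The key identity is $Q_m^\top\ektrain_i=\hwtrain_i^{(1)}$ from \Cref{eq:first_stage}. It gives $Q_m^\top\Sigma^{\mathrm{EK}}Q_m=\hwsecond^{(1)}$, and hence
\[
\tr(W^\top\Sigma^{\mathrm{EK}}W)=\tr(R^\top\hwsecond^{(1)}R).
\]
By Ky Fan, the right-hand side is maximized over orthonormal $R$ by the top-$k$ eigenvectors of $\hwsecond^{(1)}$, which is exactly $R=P$. So $U_\star$ is optimal among all $k$-dimensional subspaces of $\operatorname{span}(Q_m)$. The comparison with $Q_k$ follows as a special case: $Q_k=Q_mE$, where $E=[I_k;0]$ selects the leading $k$ columns, so $\operatorname{span}(Q_k)$ is one of the admissible subspaces.

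\textbf{Part (ii).} Let $U_{\mathrm{full}}$ span a top-$k$ principal subspace of $\Sigma^{\mathrm{EK}}$ with $\operatorname{span}(U_{\mathrm{full}})\subseteq\operatorname{span}(Q_m)$. Then $U_{\mathrm{full}}$ is itself admissible in part (i), so $U_\star$ does at least as well:
\[
\tr(U_\star^\top\Sigma^{\mathrm{EK}}U_\star)\ge\tr(U_{\mathrm{full}}^\top\Sigma^{\mathrm{EK}}U_{\mathrm{full}})=\sum_{j\le k}\mu_j(\Sigma^{\mathrm{EK}}),
\]
where $\mu_j$ denotes the $j$-th largest eigenvalue. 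Ky Fan in the full $d$-dimensional space gives the reverse inequality. Hence equality holds, and through the trace identity the reconstruction error of $U_\star$ equals the full-space PCA minimum.

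\textbf{Main obstacle.} The only delicate point is ties in the eigenvalues, where $P$ and the principal subspace are not unique. I handle this by stating everything at the level of attained trace values rather than the subspaces themselves: Ky Fan guarantees that any choice of top-$k$ eigenvectors attains the maximal trace, so the argument does not depend on which eigenvectors are picked. Everything else is routine trace algebra.
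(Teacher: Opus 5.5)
Your proof is correct and follows essentially the same route as the paper. Both arguments reduce the restricted problem to uncentered PCA of the first-stage coordinates $Q_m^\top\ektrain_i$: the paper splits the residual into its components orthogonal to and inside $\operatorname{span}(Q_m)$, while you use the equivalent trace identity together with $Q_m^\top\Sigma^{\mathrm{EK}}Q_m=\hwsecond^{(1)}$. Both then prove part (ii) by the same feasibility argument that sandwiches the restricted minimum against the full-space minimum.
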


Thus, retaining $m>k$ candidate axes allows the final subspace to be optimized within a larger span while storing only $k$ coordinates per training example. The proof is provided in \Cref{app:eigenbasis_correction}.

\paragraph{A scalable alternative to PCA.}
PCA minimizes reconstruction error within the retained span for a fixed output dimension $k$. However, storing and applying its dense correction matrix $P\in\mathbb{R}^{m\times k}$ becomes costly as $k$ grows. To exploit larger per-example storage budgets without this overhead, we introduce \methodrandom{}, which replaces the PCA correction with an implicitly applied subsampled randomized Hadamard transform (SRHT)~\citep{tropp2011improved}, $\sketch\in\mathbb{R}^{k\times m}$:
\[
\hwtrain_i^{(2,\mathrm{R})}
= \sketch\hwtrain_i^{(1)}.
\]
The construction and normalization of $\sketch$ are detailed in \Cref{app:srht}.

When the output dimension $k$ is sufficiently large, SRHT preserves norms within a fixed low-dimensional subspace up to bounded distortion with high probability~\citep{tropp2011improved}. For query and training representations in this subspace, this guarantee also controls inner-product distortion and the resulting influence-score error introduced by sketching. This variant offers a trade-off between PCA's fixed-$k$ reconstruction optimality and scalability to larger output dimensions without a dense correction matrix.

\subsection{One-Bit Quantization} \label{sec:onebit}

Even after projection, storing every coordinate in floating point limits the number of directions we can retain under a fixed per-example storage budget. We apply one-bit quantization to both \method{} and \methodrandom{} to accommodate more projected coordinates within this budget.

For each module $u$, let $\hwtrain_{i,u}^{(2)}\in\mathbb{R}^{k_u}$ denote the projected training-gradient coordinates. We store a binary sign vector and one scale: \begin{equation} b_{i,u}\coloneqq\operatorname{sign}\big(\hwtrain_{i,u}^{(2)}\big)\in\{\pm1\}^{k_u}, \qquad s_{i,u}\coloneqq\frac{1}{k_u}\big\|\hwtrain_{i,u}^{(2)}\big\|_1. \label{eq:onebit} \end{equation} The scale captures the average magnitude of the coordinates and is chosen to minimize $\|\hwtrain_{i,u}^{(2)}-s\,b_{i,u}\|_2^2$ over $s$~\citep{rastegari2016xnor}. Each module requires $k_u$ sign bits and one half-precision scale per training example. The complete representation contains $\sum_u k_u$ coordinates.

At query time, we apply the same projection to the query gradient and keep its coordinates $\hwquery_u^{(2)}$ unquantized. We estimate influence by \begin{equation} \widehat{\mathcal{I}}(i)\coloneqq\sum_u s_{i,u}\,\big\langle\hwquery_u^{(2)},b_{i,u}\big\rangle. \label{eq:onebit_influence} \end{equation} This computes influence from the stored representations without recomputing training gradients.

\section{Experiments}
\label{sec:experiments}

We evaluate whether compact gradient representations preserve useful training-data attributions under limited storage budgets. We first compare against retraining outcomes on GPT-2 (\S\ref{sec:exp_retraining}), then measure fidelity to EK-FAC influence on billion-parameter LLMs (\S\ref{sec:exp_llms}). We examine the contributions of the projection and quantization choices (\S\ref{sec:exp_ablation}), and report the storage and computation required to build the representations (\S\ref{sec:exp_cost}).

\subsection{Validation Against Retraining}
\label{sec:exp_retraining}

\begin{figure}[t]
\centering
\includegraphics[width=\textwidth]{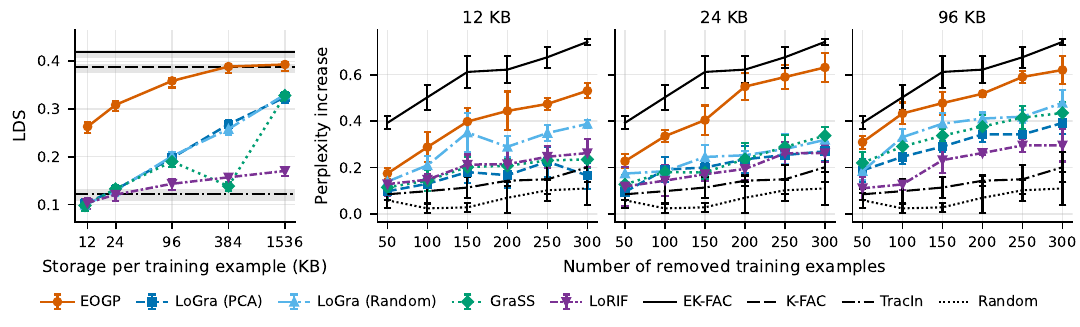}
\caption{Validation against retraining-based ground truth on GPT-2 across storage budgets: linear datamodeling score (left) and counterfactual retraining (right). Black lines denote uncompressed influence functions with different Hessian approximations and the random-removal baseline.}
\label{fig:lds_counterfactual}
\end{figure}

\paragraph{Setup and metrics.} We evaluate GPT-2~\citep{radford2019language} on WikiText-2~\citep{merity2016pointer} at matched per-example storage budgets of $\{12,24,96,384,1536\}$ KB. An uncompressed half-precision gradient requires 162 MB per example. Compression baselines include LoGra~\citep{choe2026your} with PCA and random initialization, GraSS~\citep{hu2026grass}, and LoRIF~\citep{li2026lorif}. We use EK-FAC influence, K-FAC influence, and TracIn~\citep{pruthi2020estimating} as uncompressed references, and random removal as an additional baseline for the counterfactual experiment.

We use two retraining-based metrics. The linear datamodeling score (LDS)~\citep{park2023trak} evaluates how well influence estimates predict retraining outcomes. It measures the rank correlation between the estimated influences of randomly sampled training subsets and the outcomes measured after retraining the model on each subset. Counterfactual retraining~\citep{bae2024training} removes the examples ranked most influential by each method and measures the resulting increase in validation perplexity after retraining. Higher LDS indicates closer agreement with retraining outcomes, while a larger perplexity increase indicates a greater impact of removing the selected examples. Further metric and protocol details are provided in \Cref{app:exp_retraining}.

\paragraph{Results.} \Cref{fig:lds_counterfactual} shows that \method{} improves retraining-based attribution quality over the compression baselines, with the largest gains at small storage budgets. In LDS, \method{} at 96 KB exceeds all compression-based baselines at 1,536 KB, despite using 16 times less per-example storage. At 384 KB, it achieves LDS close to the uncompressed K-FAC reference while using over 400 times less per-example storage than the 162 MB uncompressed representation. 
The counterfactual experiment likewise shows that the examples selected by \method{} produce larger perplexity increases than those selected by the compression baselines.

\subsection{Fidelity to EK-FAC at LLM Scale}
\label{sec:exp_llms}

\paragraph{Setup.} We evaluate the SFT checkpoints of OLMo 2~\citep{olmo20242} with 1B, 7B, 13B, and 32B parameters. Training examples come from the T\"ulu 3 SFT mixture~\citep{lambert2024tulu}, and queries come from a held-out split. Uncompressed half-precision gradients occupy approximately 2--62 GB per example across these models. 
We compare compression methods at budgets ranging from a few kilobytes to megabytes per example, corresponding to compression ratios on the order of $10^4$--$10^6$ relative to uncompressed half-precision gradients.
Further evaluation details are given in \Cref{app:exp_llms}.
\begin{wrapfigure}{R}{0.5\textwidth}
\centering
\includegraphics[width=\linewidth]{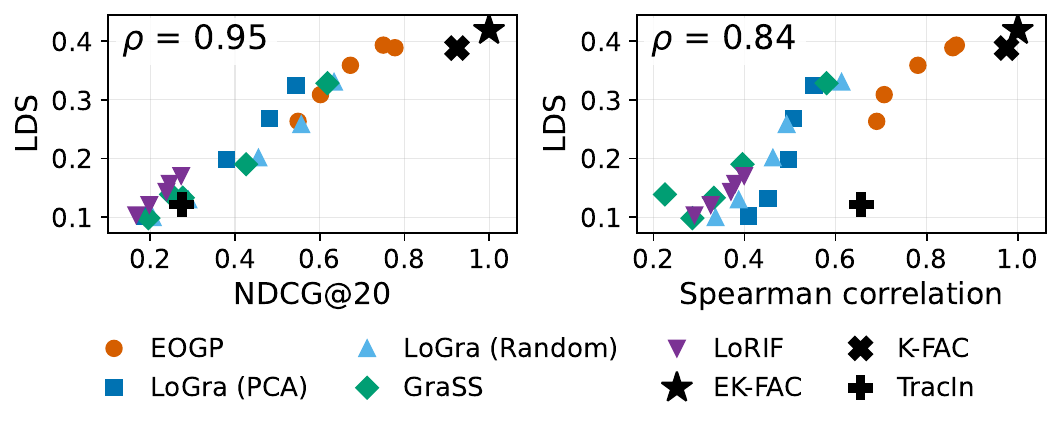}
\caption{LDS versus EK-FAC fidelity on GPT-2, measured by NDCG@20 (left) and Spearman correlation (right). $\rho$ is the Spearman correlation across configurations from \Cref{fig:lds_counterfactual}.}
\label{fig:lds_vs_EKFAC}
\vspace{-10px}
\end{wrapfigure}

\paragraph{Metrics and reference.}
Since retraining-based evaluation is computationally prohibitive at this scale, we use agreement with EK-FAC influence rankings as a proxy for attribution quality. We choose EK-FAC as the reference because it performs best in our GPT-2 retraining-based evaluation among the evaluated methods that scale to billion-parameter LLMs (\Cref{sec:exp_retraining}), consistent with prior comparisons~\citep{choe2026your}. For each query, we measure overall ranking agreement using Spearman correlation and agreement near the top of the ranking using NDCG@20~\citep{jarvelin2002cumulated}, then average both metrics over queries.

To examine whether these agreement metrics reflect retraining-based attribution quality, \Cref{fig:lds_vs_EKFAC} shows their relationship with LDS on GPT-2.
Each point corresponds to an evaluated configuration from \Cref{fig:lds_counterfactual}. The x-axis shows agreement between the influence rankings produced by each configuration and the EK-FAC reference rankings, measured by NDCG@20 (left) or Spearman correlation (right). The y-axis shows LDS, which measures how well each configuration's influence estimates predict retraining outcomes.
We report the Spearman correlations ($\rho$) between the x-axis metric and LDS across configurations: 0.95 for NDCG@20 and 0.84 for overall ranking agreement. 
These positive correlations indicate that configurations whose rankings more closely match EK-FAC generally also better predict the retraining outcomes, providing empirical support for using EK-FAC ranking agreement as a proxy for attribution quality.

\begin{figure}[t]
\centering
\includegraphics[width=.99\textwidth]{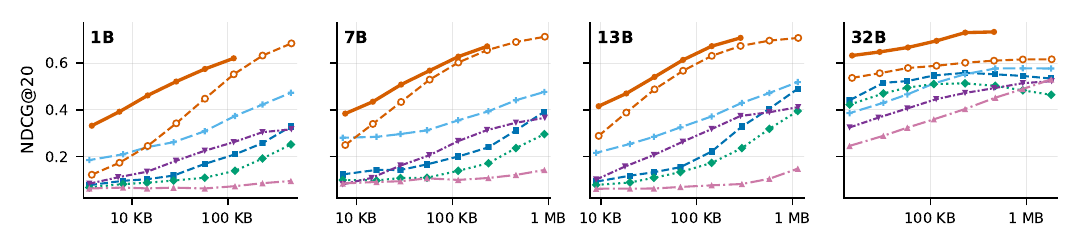}
\includegraphics[width=.99\textwidth]{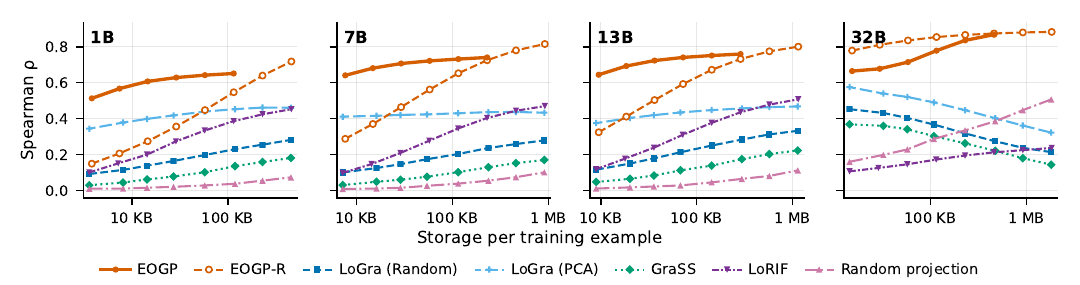}

\caption{NDCG@20 (top) and Spearman correlation (bottom) of gradient compression methods against EK-FAC influence functions on OLMo 2 SFT models from 1B to 32B parameters.}
\label{fig:ekfac}
\end{figure}

\paragraph{Results.} \Cref{fig:ekfac} shows substantial gains over the compression baselines across the evaluated model sizes, particularly at small storage budgets. Fidelity generally improves as more storage is allocated, and continues to increase at the largest tested budgets on the 32B model. The improvements appear in both the overall ranking and the top-ranked examples. These results are obtained at compression ratios of approximately $10^4$--$10^6$, supporting the use of compact representations even when storing uncompressed gradients would require tens of gigabytes per example. 
In \Cref{app:shared_curvature}, we additionally evaluate LoGra and GraSS with the same EK-FAC half-whitening and show that \method{} consistently outperforms both variants.

\subsection{Ablation Studies}
\label{sec:exp_ablation}
\begin{wrapfigure}{R}{0.45\textwidth}
\centering
\includegraphics[width=\linewidth]{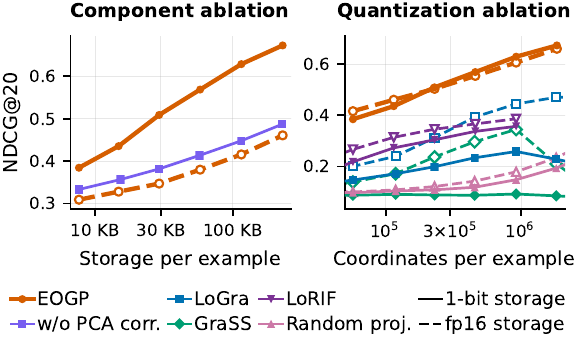}
\caption{Component ablation of \method{} (left) and comparison of 16-bit and 1-bit storage at matched coordinate counts (right).}
\label{fig:ablation}
\vspace{-30px}
\end{wrapfigure}%
We examine the effects of PCA basis correction (\S\ref{sec:two_stage}) and one-bit quantization (\S\ref{sec:onebit}) on OLMo~2 7B SFT. We compare methods at matched per-example storage budgets (\Cref{fig:ablation}, left) and matched numbers of stored coordinates (\Cref{fig:ablation}, right). These comparisons assess performance under a fixed storage constraint and representation quality at a fixed dimension, respectively.

\paragraph{Comparison at matched storage budgets.}
At matched storage budgets, \method{} consistently achieves higher NDCG@20 than direct selection of EK-FAC basis directions (\Cref{fig:ablation}, left). Both variants use one-bit quantization, demonstrating the benefit of refining the compression basis through PCA of the first-stage coordinates. For \method{}, one-bit representations also outperform FP16 representations by allowing more coordinates to be retained within the same storage budget.
We provide further evidence of \method{}'s robustness to one-bit quantization and discuss a possible explanation in \Cref{app:onebit_analysis}.

\paragraph{Comparison at matched coordinate counts.}
At matched coordinate counts, \method{} achieves higher NDCG@20 than every baseline stored at the same precision, under both FP16 and one-bit storage (\Cref{fig:ablation}, right).
This advantage at fixed dimension supports the quality of the two-stage projection.
Within \method{}, one-bit and FP16 representations achieve similar NDCG@20, indicating that one-bit quantization substantially reduces storage with little loss in ranking agreement.

\subsection{Storage and Computational Costs}
\label{sec:exp_cost}

\paragraph{Per-example storage.} 
With $k_u$ coordinates in module $u$, storing the compressed representation of one training example across all modules requires $\sum_u(\lceil k_u/8\rceil+2)$ bytes, including packed sign bits and a two-byte scale per module. At $k_u=2{,}048$, the 32B model requires approximately 115.6~KB per training example, compared with 62.4~GB for an uncompressed half-precision gradient.

\paragraph{Shared storage overhead.} The Kronecker eigenbases, half-whitening weights, and PCA correction matrices are shared across all training examples and queries and are accounted for separately from the per-example payloads. For the 32B model, each module's PCA correction matrix requires approximately 1~GB when its first-stage and final output dimensions are $m_u=262{,}144$ and $k_u=2{,}048$, respectively. These shared storage costs do not grow with the number of stored examples. \Cref{app:storage_accounting} provides formulas for per-example and shared storage.

\paragraph{Store-construction time.} On a single NVIDIA B200, our implementation of \method{} achieves store-construction times comparable to LoGra. For the 32B model at approximately 2,048 output coordinates per module, the per-example time is 0.247 seconds for \method{} and 0.278 seconds for LoGra. These costs include gradient computation, projection, and disk I/O, including correction-matrix loading and transfer for \method{}. Fitting and saving the PCA correction matrices from cached first-stage coordinates takes an additional 18.4 minutes as a one-time preprocessing cost. Implementation optimizations and detailed timing protocols are provided in \Cref{app:timing}.
\section{Related Work}
\label{sec:related_work}

\paragraph{Scalable influence estimation.} Influence estimation in large neural networks relies on efficient approximations to inverse-curvature products~\citep{koh2017understanding,guo2021fastif,kwon2024datainf}. Arnoldi iteration approximates a dominant Hessian eigenspace to compute influence in a reduced space~\citep{schioppa2022scaling}. K-FAC and EK-FAC use structured curvature approximations~\citep{martens2015optimizing,george2018fast}, with EK-FAC enabling influence analysis at LLM scale~\citep{grosse2023studying}.

\paragraph{Gradient compression for data attribution.} TRAK scales attribution through random projections~\citep{park2023trak}, while LESS and its quantized extension QLESS select instruction data using projected gradient similarities rather than the inverse-curvature-based influence scores studied here~\citep{xia2024less,ananta2025qless}. For influence estimation, TrackStar~\citep{chang2025scalable} uses curvature-corrected gradients for influence retrieval at LLM scale. LoGra uses Kronecker-factored projections with random or PCA-based initialization~\citep{choe2026your}. GraSS combines gradient sparsification with sparse random projection~\citep{hu2026grass}. LoRIF stores projected gradients as low-rank factors and approximates inverse curvature via truncated SVD~\citep{li2026lorif}.
\section{Conclusion}
\label{sec:conclusion}
We present \method{} for building compact gradient stores for influence estimation across future queries. Under a normalized worst-case influence-error criterion, we show that PCA of half-whitened training gradients is optimal among fixed-dimensional linear representations. \method{} approximates this target through PCA correction within a retained EK-FAC subspace and uses scaled one-bit quantization to retain more coordinates within a fixed storage budget. Our results show that \method{} preserves useful attribution information with substantially less storage than existing compression methods, making reusable gradient stores more practical at LLM scale.

\bibliography{ref}
\bibliographystyle{iclr2027_conference}

\appendix
\section{Theoretical Details}
\label[appendix]{app:theory}

\subsection{Optimality of Half-Whitened PCA}
\label[appendix]{app:optimality}

We prove \Cref{prop:optimality} for a fixed symmetric positive-definite matrix $\Hlambda$ and half-whitened gradients $\hwtrain_i=\Hlambda^{-1/2}\traingrad{i}$ satisfying $\expectation_i\|\hwtrain_i\|_2^2<\infty$. The supremum is over the full query-gradient ball in \Cref{eq:ball}.

\begin{proof}
Let $\hwsecond=\expectation_i[\hwtrain_i\hwtrain_i^{\top}]$, with eigenvalues $\mu_1\ge\cdots\ge\mu_d\ge0$, and let $1\le k\le d$. We show that the minimum of \Cref{eq:criterion} is $\sum_{j=k+1}^d\mu_j$ and that the proposed encoder--decoder pair attains it.

\paragraph{Reduction to low-rank reconstruction.}
For arbitrary $V,M\in\mathbb{R}^{d\times k}$, define
\begin{equation}
R\coloneqq\Hlambda^{1/2}MV^{\top}\Hlambda^{1/2},
\qquad \rank(R)\le k.
\label{eq:app_reconstruction_map}
\end{equation}
Write $\hwquery=\Hlambda^{-1/2}\querygrad$. Since $\traingrad{i}=\Hlambda^{1/2}\hwtrain_i$ and $\querygrad=\Hlambda^{1/2}\hwquery$, the influence error satisfies
\begin{equation}
\querygrad^{\top}\big(\Hlambda^{-1}-MV^{\top}\big)\traingrad{i}
=\hwquery^{\top}(I-R)\hwtrain_i.
\end{equation}
The change of variables between $\querygrad$ and $\hwquery$ is bijective and maps the normalized query-gradient ball to the Euclidean unit ball. Therefore,
\begin{equation}
\compressionerror(V,M)
=\expectation_i\Big[\sup_{\|\hwquery\|_2\le1}\big|\hwquery^{\top}(I-R)\hwtrain_i\big|^2\Big]
=\expectation_i\big\|(I-R)\hwtrain_i\big\|_2^2.
\label{eq:app_reconstruction_objective}
\end{equation}
The last equality follows from Cauchy--Schwarz, with equality for a unit query in the direction of $(I-R)\hwtrain_i$ whenever this residual is nonzero. If it is zero, both sides are zero. Conversely, every matrix $R$ of rank at most $k$ admits a factorization $R=XY^{\top}$ with $X,Y\in\mathbb{R}^{d\times k}$, padding with zero columns when needed. Choosing $M=\Hlambda^{-1/2}X$ and $V=\Hlambda^{-1/2}Y$ realizes this $R$ in \Cref{eq:app_reconstruction_map}. Thus optimizing over $V,M$ is equivalent to optimizing \Cref{eq:app_reconstruction_objective} over all rank-at-most-$k$ matrices $R$.

\paragraph{Orthogonal projection suffices.}
Let $\mathcal{S}=\range(R)$, and let $\Pi_{\mathcal{S}}$ be the orthogonal projector onto $\mathcal{S}$. For each example, $(I-\Pi_{\mathcal{S}})\hwtrain_i$ is orthogonal to $\mathcal{S}$, whereas $(\Pi_{\mathcal{S}}-R)\hwtrain_i$ belongs to $\mathcal{S}$. Consequently,
\begin{equation}
\begin{aligned}
\big\|(I-R)\hwtrain_i\big\|_2^2
&=\big\|(I-\Pi_{\mathcal{S}})\hwtrain_i\big\|_2^2+\big\|(\Pi_{\mathcal{S}}-R)\hwtrain_i\big\|_2^2\\
&\ge\big\|(I-\Pi_{\mathcal{S}})\hwtrain_i\big\|_2^2.
\end{aligned}
\label{eq:app_projection_dominance}
\end{equation}
Taking expectations and using $\Pi_{\mathcal{S}}^{\top}=\Pi_{\mathcal{S}}=\Pi_{\mathcal{S}}^2$ gives
\begin{equation}
\compressionerror(V,M)
\ge\tr\big((I-\Pi_{\mathcal{S}})\hwsecond\big)
=\tr(\hwsecond)-\tr(\Pi_{\mathcal{S}}\hwsecond).
\label{eq:app_energy_bound}
\end{equation}

\paragraph{The principal subspace attains the minimum.}
This PCA optimality step is a consequence of the classical Eckart--Young theorem~\citep{eckart1936approximation}.
Write $\hwsecond=\sum_{j=1}^d\mu_j u_j u_j^{\top}$ in an orthonormal eigenbasis. Set $\alpha_j=u_j^{\top}\Pi_{\mathcal{S}}u_j=\|\Pi_{\mathcal{S}}u_j\|_2^2$. Because $\Pi_{\mathcal{S}}$ is an orthogonal projector of rank at most $k$, these weights satisfy $0\le\alpha_j\le1$ and $\sum_{j=1}^d\alpha_j=\tr(\Pi_{\mathcal{S}})\le k$. It follows that
\begin{equation}
\tr(\Pi_{\mathcal{S}}\hwsecond)
=\sum_{j=1}^d\mu_j\alpha_j
\le\sum_{j=1}^k\mu_j\alpha_j+\mu_k\Big(k-\sum_{j=1}^k\alpha_j\Big)
\le\sum_{j=1}^k\mu_j.
\label{eq:app_topk_energy}
\end{equation}
The first inequality uses $\mu_j\le\mu_k$ for $j>k$ and the bound on the sum of the weights. The second uses $\mu_j\ge\mu_k$ for $j\le k$ and $\alpha_j\le1$. Combining \Cref{eq:app_energy_bound,eq:app_topk_energy}, every encoder--decoder pair satisfies
\begin{equation}
\compressionerror(V,M)\ge\tr(\hwsecond)-\sum_{j=1}^k\mu_j=\sum_{j=k+1}^d\mu_j.
\end{equation}

Now take $U=[u_1,\ldots,u_k]$ and $V=M=\Hlambda^{-1/2}U$. Then $R=UU^{\top}$, so
\begin{equation}
\compressionerror\big(\Hlambda^{-1/2}U,\Hlambda^{-1/2}U\big)
=\tr\big((I-UU^{\top})\hwsecond\big)
=\sum_{j=k+1}^d\mu_j.
\label{eq:app_optimal_value}
\end{equation}
This pair therefore attains the global minimum. Its stored representation and influence estimate are precisely
\begin{align}
V^{\top}\traingrad{i}&=U^{\top}\hwtrain_i,\\
\querygrad^{\top}MV^{\top}\traingrad{i}&=\hwquery^{\top}UU^{\top}\hwtrain_i
=\langle U^{\top}\hwquery,U^{\top}\hwtrain_i\rangle.
\end{align}
This proves \Cref{prop:optimality}.
\end{proof}

The result uses the uncentered second moment and remains valid when eigenvalues are repeated.

\subsection{Proof of Eigenbasis Correction}
\label[appendix]{app:eigenbasis_correction}

We prove \Cref{prop:rayleigh_ritz} within one module. Let $x_i=\Hekfac^{-1/2}\traingrad{i}$ have finite second moment, let $Q_m\in\mathbb{R}^{d\times m}$ have orthonormal columns, and set $a_i=Q_m^{\top}x_i$. Let $P\in\mathbb{R}^{m\times k}$ contain orthonormal leading eigenvectors of $\expectation_i[a_i a_i^{\top}]$, where $1\le k\le m$.

\begin{proof}
Every $k$-dimensional subspace of $\range(Q_m)$ has an orthonormal basis $U=Q_mW$ for some $W\in\mathbb{R}^{m\times k}$ satisfying $W^{\top}W=I_k$. Decomposing $x_i-UU^{\top}x_i$ into components orthogonal to and within $\range(Q_m)$ yields
\begin{equation}
\begin{aligned}
\expectation_i\|x_i-UU^{\top}x_i\|_2^2
&=\expectation_i\|(I-Q_mQ_m^{\top})x_i\|_2^2\\
&\quad+\expectation_i\|a_i-WW^{\top}a_i\|_2^2.
\end{aligned}
\label{eq:app_restricted_reconstruction}
\end{equation}
The first term is independent of $W$. By the PCA argument in \Cref{app:optimality}, the second is minimized by $W=P$. Thus $Q_mP$ minimizes reconstruction error over all $k$-dimensional subspaces of $\range(Q_m)$. The leading $k$ retained EK-FAC axes are among these feasible choices, proving part~(i).

For part~(ii), suppose $\range(Q_m)$ contains a globally optimal $k$-dimensional principal subspace of $x_i$ within the module. This subspace is feasible for the restricted problem, so the restricted minimum is at most the full-space minimum. It is also at least that minimum because the feasible set is restricted. The two errors therefore agree.
\end{proof}

\subsection{Scaled One-Bit Quantization}
\label[appendix]{app:quantization_scope}

\paragraph{Optimal scale for a fixed sign vector.}
For projected coordinates $a\in\mathbb{R}^k$ with $k\ge1$, let $b=\operatorname{sign}(a)\in\{\pm1\}^k$, allowing either sign for zero coordinates. For a scale $s\in\mathbb{R}$, the identities $b^{\top}a=\|a\|_1$ and $b^{\top}b=k$ give
\begin{equation}
\|a-sb\|_2^2
=\|a\|_2^2-2s\|a\|_1+ks^2
=k\left(s-\frac{\|a\|_1}{k}\right)^2+\|a\|_2^2-\frac{\|a\|_1^2}{k}.
\label{eq:app_onebit_scale}
\end{equation}
Thus, $s=\|a\|_1/k$ minimizes the squared reconstruction error for the sign vector $b$, including $s=0$ when $a=0$.
\FloatBarrier

\section{Algorithms and Projection Implementation}
\label[appendix]{app:algorithm}
\label[appendix]{app:exp_projection}

\subsection{Projection paths and unified procedure}

Both stages operate independently on the linear-layer modules defined in \Cref{sec:two_stage}. Across all evaluated models, \method{} applies PCA to the first-stage coordinates, while \methodrandom{} replaces PCA with an SRHT. The model-specific module lists, dimensions, and storage-budget allocations are recorded with the experimental settings.

For module $u$, the first-stage map and the subsequent projection are
\begin{equation}
L_u=(\Lambda_{u,m}+\lambda_u I)^{-1/2}Q_{u,m}^{\top},
\qquad
T_u(v)=
\begin{cases}
P_u^{\top}v, & \text{for \method{}},\\
\mathsf{S}_u v, & \text{for \methodrandom{}}.
\end{cases}
\label{eq:app_projection_maps}
\end{equation}
Here, $P_u\in\mathbb{R}^{m_u\times k_u}$ contains the PCA directions, and $\mathsf{S}_u\in\mathbb{R}^{k_u\times m_u}$ is an implicitly applied SRHT. The Kronecker factors and retained-coordinate indices implement $L_u$ without materializing $Q_{u,m}$ or $L_u$. We write $g_{i,u}$ and $q_u$ for the training and query gradient blocks of module $u$.

\begin{algorithm}[H]
\small
\caption{\method{} and \methodrandom{} fitting, storage, and querying}
\label{alg:eogp}
\begin{algorithmic}[1]
\Require Factorized EK-FAC bases $Q_u$, corrected eigenvalues $\Lambda_u$, damping values $\lambda_u>0$, dimensions $1\le k_u\le m_u\le d_u$, method variant, and a PCA fitting set $\mathcal C$ for \method{}
\Statex \textbf{Fitting shared projections}
\For{each module $u$}
    \State Select $Q_{u,m},\Lambda_{u,m}$ using the $m_u$ largest corrected eigenvalues
    \State For \methodrandom{}, initialize the implicit SRHT $\mathsf{S}_u$
\EndFor
\If{using \method{}}
    \For{each $i\in\mathcal C$}
        \State Compute the training gradient $g_i$ once
        \For{each module $u$}
            \State Compute and cache $z_{i,u}\gets L_u g_{i,u}$
        \EndFor
    \EndFor
    \State For each $u$, fit $P_u$ by uncentered PCA of $\{z_{i,u}:i\in\mathcal C\}$
\EndIf
\Statex \textbf{Storing one training example $i$}
\State Compute $g_i$ once
\For{each module $u$}
    \State $x_{i,u}\gets T_u(L_u g_{i,u})$
    \State Store packed binary signs $b_{i,u}\gets\operatorname{sign}_{\pm}(x_{i,u})$ and $s_{i,u}\gets\operatorname{FP16}(\|x_{i,u}\|_1/k_u)$
\EndFor
\Statex \textbf{Scoring a query $f$}
\State Compute $q=\nabla_\theta f(\theta^*)$ once
\State For each $u$, compute the unquantized coordinates $y_u\gets T_u(L_u q_u)$
\State \Return $\widehat{\mathcal I}(i)=\sum_u s_{i,u}\langle y_u,b_{i,u}\rangle$ for each stored example $i$
\end{algorithmic}
\end{algorithm}

The batched implementation is described in \Cref{app:timing}.

\subsection{PCA fitting and random projection}
\label[appendix]{app:srht}

\paragraph{PCA fitting.} For \method{}, we form the first-stage coordinates $z_{i,u}=L_ug_{i,u}$ using training-data labels and apply PCA without subtracting the sample mean. Let $Z_u\in\mathbb{R}^{n\times m_u}$ contain these coordinates as rows, where $n=|\mathcal C|$. The columns of $P_u$ are the top-$k_u$ right singular vectors of $Z_u$, equivalently the eigenvectors associated with the largest eigenvalues of $Z_u^{\top}Z_u/n$. The nonzero principal directions can be recovered from the eigendecomposition of the $n\times n$ Gram matrix $Z_uZ_u^{\top}$, avoiding construction of an $m_u\times m_u$ second-moment matrix. This is the PCA correction analyzed in \Cref{prop:rayleigh_ritz}.

\paragraph{Random projection.} For \methodrandom{}, a subsampled randomized Hadamard transform (SRHT)~\citep{tropp2011improved} maps the first-stage coordinates directly to $k_u$ dimensions,
\begin{equation}
\mathsf{S}_u
=
\sqrt{\frac{p_u}{k_u}}\,R_u H_{p_u}D_uJ_u,
\qquad
p_u=2^{\lceil\log_2 m_u\rceil}.
\label{eq:app_srht_definition}
\end{equation}
Here, $J_u$ zero-pads the $m_u$ coordinates to width $p_u$, $D_u$ is a diagonal matrix of independent random signs, $H_{p_u}$ is the orthonormal Walsh--Hadamard matrix, and $R_u$ selects $k_u$ rows uniformly without replacement. The same realized transform is used for gradient storage and query projection. This variant does not fit a PCA correction.

\subsection{Computation and complexity}
\label[appendix]{app:computation}

\paragraph{Query scoring.} Using the unquantized query coordinates $y_u=T_u(L_uq_u)$ from \Cref{alg:eogp} and the stored signs $b_{i,u}$ and scales $s_{i,u}$, the influence score is
\begin{equation}
\widehat{\mathcal{I}}(i)
=
\sum_u s_{i,u}\langle y_u,b_{i,u}\rangle.
\label{eq:app_stored_influence}
\end{equation}
The same query coordinates are reused to score all stored examples.

\paragraph{Projection cost.} Applying the PCA correction in \method{} costs $O(m_uk_u)$ operations per module. The implicit SRHT in \methodrandom{} costs $O(p_u\log p_u)$ operations per module, where $p_u$ is the padded dimension in \Cref{eq:app_srht_definition}.

\paragraph{Storage and scanning.} Computing signs and scales from an already projected training representation costs $O(\sum_u k_u)$ operations. After projecting a query, scoring all $N_{\mathrm{store}}$ examples costs
\begin{equation}
O\!\left(N_{\mathrm{store}}\sum_u k_u\right)
\label{eq:app_scan_complexity}
\end{equation}
arithmetic operations. Storage in bytes and measured runtimes are reported in \Cref{app:storage_accounting,app:timing}.
\FloatBarrier

\section{Experimental Configuration}
\label[appendix]{app:experimental_details}
\label[appendix]{app:experimental_configuration}

This section describes the configurations used in \Cref{sec:experiments}.

\subsection{Models, data, and gradient computation}
\label[appendix]{app:models_data}

We evaluate GPT-2 on WikiText-2 and released OLMo~2 SFT checkpoints on T\"ulu~3 data. The GPT-2 attribution checkpoint is fine-tuned using the protocol in \Cref{app:exp_retraining}. \Cref{tab:app_model_configuration} lists the source checkpoints and datasets.

\begin{table}[t]
\centering
\small
\caption{Source model checkpoints and datasets.}
\begin{tabularx}{\linewidth}{@{}p{0.17\linewidth}X p{0.22\linewidth}@{}}
\toprule
Model & Source checkpoint & Dataset \\
\midrule
GPT-2
& \texttt{gpt2}
& WikiText-2 \\
OLMo~2 1B
& \texttt{allenai/OLMo-2-0425-1B-SFT}
& T\"ulu~3 mixture \\
OLMo~2 7B
& \texttt{allenai/OLMo-2-1124-7B-SFT}
& T\"ulu~3 mixture \\
OLMo~2 13B
& \texttt{allenai/OLMo-2-1124-13B-SFT}
& T\"ulu~3 mixture \\
OLMo~2 32B
& \texttt{allenai/OLMo-2-0325-32B-SFT}
& T\"ulu~3 mixture \\
\bottomrule
\end{tabularx}

\label{tab:app_model_configuration}
\end{table}

\paragraph{Data preparation.}
We tokenize \texttt{wikitext-2-raw-v1} from \texttt{Salesforce/wikitext} with the GPT-2 tokenizer and divide concatenated text into 512-token blocks. Training and validation blocks provide attribution candidates and queries, respectively. OLMo~2 1B and 32B use \path{allenai/tulu-3-sft-olmo-2-mixture-0225}, while 7B and 13B use \path{allenai/tulu-3-sft-olmo-2-mixture}. Each conversation is one example, with queries selected separately from attribution candidates. Evaluation protocols are given in \Cref{app:evaluation_protocols}.

\paragraph{Loss and query measurement.}
Both training and query gradients use token-summed next-token cross entropy. GPT-2 includes all prediction positions, while OLMo~2 includes only assistant-response positions and their turn-ending tokens.

\paragraph{Attributed parameters.}
We attribute to the attention and MLP linear layers, including biases in GPT-2. 
Embeddings, the language-model output head, and normalization parameters are excluded. Attribution gradients are computed in evaluation mode.

\subsection{Curvature fitting and projection dimensions}
\label[appendix]{app:projection_configuration}

\paragraph{Fitting configuration.} We estimate EK-FAC statistics using Kronfluence. 
Fisher statistics use labels sampled from the model's next-token distribution with input contexts fixed to the data. Factor statistics are averaged over non-padding tokens, and corrected eigenvalues over examples using sequence-level gradients.

For each module $u$ with $d_u$ attributed parameters, we use relative damping
\begin{equation}
\lambda_u
=
0.1\,\frac{\operatorname{tr}(\Lambda_u)}{d_u},
\label{eq:app_module_damping}
\end{equation}
where $\Lambda_u$ is the diagonal matrix containing the module's full set of corrected eigenvalues. The resulting half-whitening transformation is applied to both training and query gradients.

\paragraph{PCA fitting data.} For OLMo~2, the PCA fitting set $\mathcal{C}$ contains 10,000 conversations sampled from the same SFT mixture with a fixed random seed, after excluding all candidate, query, and curvature-fitting examples. Each conversation is truncated to at most 2,048 tokens.

\paragraph{Projection dimensions.} \method{} applies PCA to the first-stage coordinates on all models. For OLMo~2, $m_u=262{,}144$, and the final dimension is uniform across modules, ranging in powers of two from 256 to 8,192. \methodrandom{} uses the same first-stage width and between 256 and 32,768 SRHT output coordinates.

\paragraph{Storage budgets.}
Budgets count per-example coordinates and scales, with projection dimensions chosen to match storage across methods.
In the main comparisons, \method{} and \method{}-R use one-bit coordinates with FP16 scales, while compressed baselines use FP16.
\Cref{fig:ablation} compares both precisions; in its right panel, which fixes coordinate counts on OLMo~2 7B, the one-bit curves also form a matched-storage comparison, since equal coordinate counts imply equal one-bit storage, and \method{} outperforms all one-bit baselines.
GPT-2 follows the nominal budget labels in \Cref{fig:lds_counterfactual}, and OLMo~2 reports payloads in decimal units.
Shared artifacts are accounted for separately in \Cref{app:storage_accounting}.

\subsection{Baseline configurations}
\label[appendix]{app:baseline_configuration}

All methods share attributed modules, evaluation examples, queries, and loss definitions. For OLMo~2, \method{}, LoGra, GraSS, and LoRIF fit their curvature estimates on the same 10,000 conversations. Storage formats and budget matching follow \Cref{app:projection_configuration}.

\paragraph{Compression baselines.} Our LoGra implementation is based on LogIX\footnote{\url{https://github.com/logix-project/logix}}, with random and PCA initialization on both GPT-2 and OLMo~2. LoGra uses the same relative damping coefficient ($0.1$) as in \Cref{app:projection_configuration}. PCA initialization uses factor eigenvectors fitted with Kronfluence\footnote{\url{https://github.com/pomonam/kronfluence}}, and the factor rank $a$ determines $a^2$ stored coordinates per module. Our GraSS implementation follows the official code, using factor-wise random masking and CountSketch with output dimensions matched to LoGra. Our LoRIF reimplementation uses randomly projected gradient matrices and a rank-256 inverse-curvature approximation.

\paragraph{Uncompressed references.} EK-FAC and K-FAC use Kronfluence with model-sampled labels and the damping in \Cref{app:projection_configuration}. The TracIn reference is the unweighted gradient inner product at the final checkpoint.

\paragraph{Gaussian random projection.} This control uses independent standard Gaussian factors to produce $a^2$ coordinates per module. Training gradients are projected directly, and query gradients are preconditioned by the full-space damped EK-FAC inverse before projection. Inner products are scaled by $a^{-2}$.

\paragraph{Randomization.} LoGra and LoRIF use one projection realization per configuration. GraSS uses one on GPT-2 and averages over three seeds on OLMo~2. The Gaussian control averages over three independent draws, and \methodrandom{} over up to three disjoint blocks of a shared SRHT output. Retraining and uncertainty estimation follow \Cref{app:exp_retraining}.
\FloatBarrier

\section{Evaluation Protocols}
\label[appendix]{app:evaluation_protocols}
\label[appendix]{app:metrics}

\subsection{GPT-2 retraining evaluation}
\label[appendix]{app:exp_retraining}

We evaluate attribution on WikiText-2 using the linear datamodeling score (LDS) and counterfactual retraining.

\paragraph{Linear datamodeling score.}
We reuse the publicly released subset masks and retraining losses from Kronfluence for $R=100$ subsets, each retaining half of the training blocks. All methods and storage budgets use the same subsets and losses on the validation blocks. The query measurement is the token-summed next-token cross-entropy loss of a validation block.

Let $S_r$ denote the retained training subset and $Y_{rq}$ its released retraining loss for query $q$. Under our removal-based influence convention, the predicted subset score is
\begin{equation}
A_{rq}
=
-\sum_{i\in S_r}\widehat{\mathcal I}_q(i).
\label{eq:app_lds_measurements}
\end{equation}
For each query, we compute the Spearman correlation across subsets,
\begin{equation}
\operatorname{LDS}_q
=
\rho_{\mathrm S}
\left(
(A_{rq})_{r=1}^{R},
(Y_{rq})_{r=1}^{R}
\right),
\label{eq:app_lds}
\end{equation}
and report its mean over queries, excluding undefined correlations. We use average ranks for ties. We obtain 95\% percentile confidence intervals from 1,000 bootstrap resamples of the subsets, recomputing the query-averaged LDS for each resample.

\paragraph{Counterfactual retraining.}
For each method and storage budget, we rank the training blocks by the sum of their signed influence scores over the first 50 validation blocks. We remove the examples with the largest scores, reusing the same ranking for removal counts from 50 to 300 in increments of 50.

Each retraining run starts from the pretrained GPT-2 weights and uses AdamW with a learning rate of $3\times10^{-5}$, weight decay of $0.01$, three epochs, and a training batch size of eight. We use a set $\mathcal S$ of five seeds, matched across methods. Perplexity is evaluated on the same 50 validation blocks used to construct the ranking by exponentiating their mean token-level cross-entropy loss.

Let $\operatorname{PPL}_s(b)$ denote the perplexity for seed $s$ after removing $b$ examples. We report
\begin{equation}
\Delta\operatorname{PPL}(b)
=
\frac{1}{5}
\sum_{s\in\mathcal S}
\left[
\operatorname{PPL}_s(b)
-
\operatorname{PPL}_s(0)
\right],
\label{eq:app_counterfactual}
\end{equation}
where the zero-removal runs are shared across methods. Error bars show the population standard deviation of the post-removal perplexities across seeds. For the random-removal control, each seed defines a random permutation of the training set, and the first $b$ examples are removed.

\subsection{Fidelity to EK-FAC influence}
\label[appendix]{app:exp_llms}

We compare compressed influence scores with uncompressed EK-FAC scores on the same candidate training examples. OLMo~2 uses 1,000 candidates and 100 queries, while GPT-2 uses the training and validation blocks as candidates and queries, respectively. Module contributions are summed before evaluation, and both score vectors use signed scores under the same removal-based convention, without taking absolute values or clipping.

\paragraph{Spearman correlation.}
For query $q$, let $\mathbf e_q$ and $\mathbf a_q$ denote the reference and compressed score vectors over the candidate examples. We compute
\begin{equation}
\rho_q
=
\operatorname{Corr}
\left(
\operatorname{rank}(\mathbf a_q),
\operatorname{rank}(\mathbf e_q)
\right),
\qquad
\overline{\rho}_{\mathrm S}
=
\frac{1}{|\mathcal Q_{\mathrm{valid}}|}
\sum_{q\in\mathcal Q_{\mathrm{valid}}}\rho_q.
\label{eq:app_spearman}
\end{equation}
Ties receive average ranks. A correlation is undefined when either score vector is constant, and $\mathcal Q_{\mathrm{valid}}$ contains the queries with defined correlations.

\paragraph{NDCG@20.}
We use binary relevance based on membership in the reference top 20. Let $T_q$ contain the 20 examples with the largest signed EK-FAC scores, and let $\pi_q(t)$ denote the example at position $t$ when compressed scores are sorted in descending order. Then
\begin{equation}
\operatorname{NDCG@20}(q)
=
\frac{
\displaystyle\sum_{t=1}^{20}
\frac{\mathbf{1}\{\pi_q(t)\in T_q\}}{\log_2(t+1)}
}{
\displaystyle\sum_{t=1}^{20}
\frac{1}{\log_2(t+1)}
}.
\label{eq:app_ndcg}
\end{equation}
The denominator is the ideal discounted gain when all 20 relevant examples occupy the first 20 positions. We report the arithmetic mean of NDCG@20 over queries.

\subsection{Relationship between fidelity and retraining}
\label[appendix]{app:fidelity_lds_relationship}

\Cref{fig:lds_vs_EKFAC} compares EK-FAC fidelity with LDS across the 25 compressed configurations and three reference methods in \Cref{fig:lds_counterfactual}. For each configuration, all metrics use the same attribution score matrix. We average each metric over queries before computing correlations across configurations.

Let $\mathcal{J}$ denote these 28 configurations, and let $L_j$, $N_j$, and $S_j$ denote their query-averaged LDS, NDCG@20, and Spearman fidelity. The resulting correlations are
\begin{equation}
\begin{aligned}
\rho_{\mathrm{S}}\!\left((L_j)_{j\in\mathcal{J}},(N_j)_{j\in\mathcal{J}}\right)
&\approx 0.95,\\
\rho_{\mathrm{S}}\!\left((L_j)_{j\in\mathcal{J}},(S_j)_{j\in\mathcal{J}}\right)
&\approx 0.84.
\end{aligned}
\label{eq:app_fidelity_lds}
\end{equation}
\FloatBarrier

\section{Storage and Runtime Accounting}
\label[appendix]{app:cost}

\subsection{Per-example payload and shared artifacts}
\label[appendix]{app:storage_accounting}

We account separately for per-example representations and artifacts shared across the gradient store. Let $\mathcal{U}$ denote the attributed modules, $m_u$ and $k_u$ the first-stage and final output dimensions of module $u$, respectively, and $N_{\mathrm{store}}$ the number of stored examples. The attributed parameter set is specified in \Cref{app:models_data}.

\paragraph{Per-example payload.} \method{} and \methodrandom{} store one sign bit per coordinate and one FP16 scale per module. With byte packing performed separately for each module, the payload is
\begin{equation}
B_{\mathrm{1bit}}
=
\sum_{u\in\mathcal{U}}
\left(
\left\lceil\frac{k_u}{8}\right\rceil+2
\right)
\quad\text{bytes}.
\label{eq:app_payload}
\end{equation}
For a uniform dimension $k$, this becomes $|\mathcal{U}|(\lceil k/8\rceil+2)$. Storing the coordinates directly in FP16 requires $B_{\mathrm{FP16}}=2\sum_u k_u$ bytes, while an uncompressed FP16 gradient requires $B_{\mathrm{raw}}=2d_{\mathrm{attr}}$ bytes, where $d_{\mathrm{attr}}$ is the number of attributed parameters.

\paragraph{Shared artifacts.} The Kronecker eigenbases and half-whitening weights are shared across examples and queries. In \method{}, each module's PCA correction matrix $P_u\in\mathbb{R}^{m_u\times k_u}$ is also shared and stored in FP16, requiring
\begin{equation}
B_{P,u}
=
2m_uk_u
\quad\text{bytes}.
\label{eq:app_correction_storage}
\end{equation}
For the 32B model with $m_u=262{,}144$ and $k_u=2{,}048$, each FP16 correction matrix requires approximately 1.07~GB, totaling 481~GB across all 448 modules. This cost is independent of the number of stored examples, so its relative contribution decreases as the store grows. For stores containing 100 million or one billion examples, the correction matrices would occupy approximately 4.2\% or 0.42\% of the aggregate per-example payload, respectively. These estimates illustrate how the shared correction matrices can be amortized when constructing attribution stores for large pretraining or mid-training corpora.

When shared storage is constrained, reducing $k_u$ decreases the correction-matrix storage linearly. For example, keeping $m_u$ unchanged and reducing $k_u$ to 256 lowers the total correction-matrix storage to approximately 60.1~GB for the 32B model. As shown in \Cref{fig:ekfac}, \method{} maintains an advantage over the evaluated compression baselines at small per-example storage budgets.

\paragraph{Total storage.} Including shared artifacts and file overhead, the total storage is
\begin{equation}
B_{\mathrm{total}}
=
N_{\mathrm{store}}B_{\mathrm{1bit}}
+
B_{\mathrm{shared}}
+
B_{\mathrm{overhead}},
\label{eq:app_total_storage}
\end{equation}
where $B_{\mathrm{shared}}$ counts shared curvature and projection artifacts, including $\sum_{u\in\mathcal{U}}B_{P,u}$ for the PCA correction matrices in \method{}, and $B_{\mathrm{overhead}}$ counts file metadata. Per-example budgets count representation payloads, with shared artifacts accounted for separately.

\subsection{Storage estimates by model}
\label[appendix]{app:reported_costs}

\Cref{tab:cost} compares per-example storage for uncompressed FP16 gradients and one-bit representations at $k_u=2{,}048$. Both \method{} and \methodrandom{} use $2048/8+2=258$ bytes per module per example.

\begin{table}[t]
\centering
\small
\caption{Per-example representation payloads at $k_u=2{,}048$ per module. Uncompressed gradients cover the attributed parameters, and one-bit representations include packed signs and FP16 scales. Shared artifacts are accounted for in \Cref{app:storage_accounting}.}
\begin{tabular}{@{}lrr@{}}
\toprule
Model & \shortstack{Uncompressed FP16\\(GB/example)} & \shortstack{One-bit representation\\(KB/example)} \\
\midrule
OLMo~2 1B  &  2.147 &  28.896 \\
OLMo~2 7B  & 12.952 &  57.792 \\
OLMo~2 13B & 25.376 &  72.240 \\
OLMo~2 32B & 62.411 & 115.584 \\
\bottomrule
\end{tabular}

\label{tab:cost}
\end{table}

\subsection{Implementation and runtime}
\label[appendix]{app:timing}

\paragraph{Setup.} \Cref{tab:app_timing_record} compares our implementation with LoGra's official LogIX implementation on a single NVIDIA B200. \method{} uses $m_u=262{,}144$, $k_u=2{,}048$, and FP16 correction matrices, while LoGra stores $2{,}025$ FP16 coordinates per module. Curvature statistics are fitted beforehand, and timings exclude curvature fitting and model loading.

\begin{table}[t]
\centering
\small
\caption{PCA fitting and store-construction times on one NVIDIA B200. Fitting starts from cached coordinates and includes saving the correction matrices. Store-construction totals combine separately timed stages, including disk I/O.}
\begin{tabular}{@{}lrrr@{}}
\toprule
Model
& \shortstack{\method{} PCA fitting\\(min)}
& \multicolumn{2}{c}{Store construction (s/example)} \\
\cmidrule(l){3-4}
& & \method{} & LoGra \\
\midrule
OLMo~2 1B  & 4.5 & 0.0563 & 0.0577 \\
OLMo~2 7B  & 8.9             & 0.0999 & 0.1315 \\
OLMo~2 13B & 11.1            & 0.1474 & 0.1661 \\
OLMo~2 32B & 18.4            & 0.2470 & 0.2780 \\
\bottomrule
\end{tabular}
\label{tab:app_timing_record}
\end{table}

\paragraph{Implementation.} First-stage coordinates are cached in FP16 using asynchronous disk writes. We then process modules sequentially, loading each correction matrix onto the GPU and applying it to a batch of coordinates before one-bit encoding and recording. Each loaded matrix is reused across the batch.

\paragraph{Timing protocol.} Store construction includes gradient computation, projection, and disk I/O, including correction-matrix loading and transfer for \method{}. Coordinate generation is timed on 40 documents per model averaging approximately 500 tokens. Subsequent PCA projection and storage use batches of approximately 11,000 documents, over which matrix-loading costs are amortized.

\begin{figure}[t]
\centering
\includegraphics[width=\linewidth]{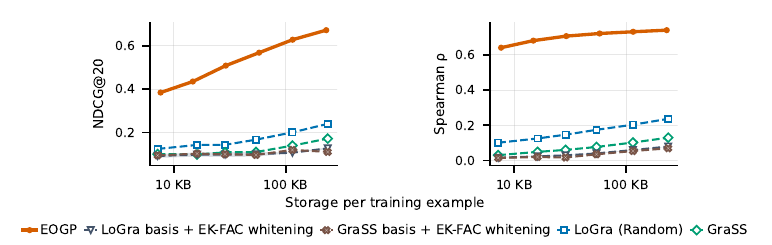}
\caption{Comparison of \method{} with LoGra and GraSS on OLMo~2 7B SFT, using either their original curvature treatment or EK-FAC half-whitening before projection. NDCG@20 (left) and Spearman correlation (right) measure agreement with EK-FAC influence at matched per-example storage budgets.}
\label{fig:same_curvature}
\end{figure}

\paragraph{One-time PCA fitting.} Fitting starts from cached first-stage coordinates of approximately 10,000 examples and follows \Cref{app:srht}. Timings include reading the coordinates, computing the correction matrices, and saving them in FP16. This one-time cost is 18.4 minutes for the 32B model and is shared across examples and queries.

\paragraph{Batched query processing.} For the 32B model with $m_u=262{,}144$ and $k_u=2{,}048$, processing 100 queries against 1,000 stored examples takes 60.36 seconds on a single NVIDIA B200, starting from cached first-stage coordinates. This includes loading and transferring the FP16 correction matrices, applying the PCA projection, and scoring the stored examples. Each correction matrix is loaded once per batch, giving an amortized processing cost of 0.604 seconds per query.

\FloatBarrier

\section{Comparison under Shared EK-FAC Curvature}
\label[appendix]{app:shared_curvature}

\begin{table}[t]
    \centering
    \caption{Pearson correlation between FP16 and one-bit influence
    scores on GPT-2. We report the median of the query-wise
    correlations over 481 queries. Storage budgets refer to
    one-bit storage per training example.
    The highest correlation at each budget is shown in bold.}
    \label{tab:onebit_score_correlation}
    \small
    \setlength{\tabcolsep}{10pt}
    \renewcommand{\arraystretch}{1.1}
    \begin{tabular}{lrrr}
        \toprule
        \textbf{Method}
        & \textbf{12 KB}
        & \textbf{24 KB}
        & \textbf{96 KB} \\
        \midrule
        EOGP
        & $\mathbf{0.902}$ & $\mathbf{0.890}$ & $\mathbf{0.919}$ \\
        LoGra (Random)
        & $0.604$ & $0.447$ & $0.086$ \\
        LoGra (PCA)
        & $0.750$ & $0.570$ & $-0.355$ \\
        GraSS
        & $0.330$ & $0.079$ & $-0.205$ \\
        LoRIF
        & $0.820$ & $0.823$ & $0.822$ \\
        \bottomrule
    \end{tabular}
\end{table}
\begin{figure}[t]
\centering
\includegraphics[width=.9\linewidth]{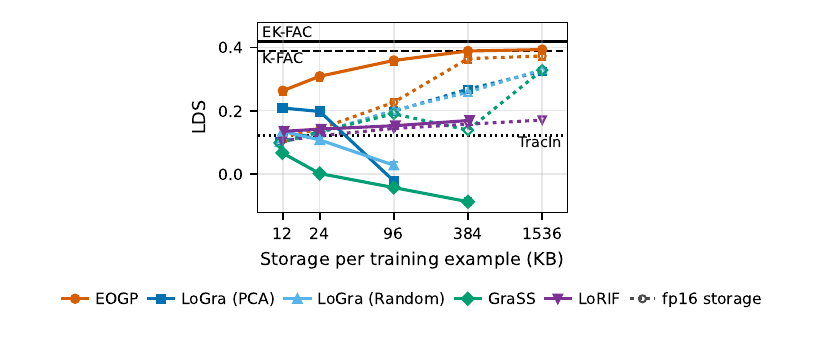}
\caption{LDS of compression methods under one-bit and FP16 storage on GPT-2 at matched per-example storage budgets. Horizontal lines denote uncompressed references. Curves end where the required projection dimension exceeds the method's supported dimension.}
\label{fig:onebit_lds}
\end{figure}

To examine whether \method{} benefits primarily from using the same curvature approximation as the evaluation reference, we construct variants of LoGra and GraSS that apply our EK-FAC half-whitening before their respective projections. On OLMo~2 7B SFT, \method{} substantially outperforms both variants at matched per-example storage budgets, showing that matching the reference curvature alone does not explain its advantage.

The original LoGra and GraSS configurations also generally outperform their half-whitened variants, supporting the effectiveness of their post-projection curvature approximations in this setting. One possible explanation is that estimating curvature after projection accounts for the variances and correlations of the retained coordinates, whereas applying half-whitening before projection produces a different transformation. These results suggest that the effectiveness of a curvature approximation depends on how it is combined with gradient compression.

\FloatBarrier
\section{Effect of One-Bit Quantization on Attribution Quality}
\label[appendix]{app:onebit_analysis}

We further examine how one-bit storage affects different gradient representations using the GPT-2 setting in \Cref{sec:exp_retraining}. We compare attribution quality at matched per-example storage budgets and separately measure how quantization changes influence scores while holding the projection dimension fixed.

\paragraph{Attribution quality under one-bit storage.}
\Cref{fig:onebit_lds} compares one-bit and FP16 storage for the evaluated compression methods. For baselines that estimate curvature after projection, larger output dimensions increase the memory required for curvature estimation, so we evaluate their one-bit variants only up to the largest feasible storage budget. Under one-bit storage, \method{} achieves the highest LDS among the evaluated compression methods at each displayed budget, with performance improving as more storage is allocated. The one-bit variants of LoGra and GraSS exhibit a different trend, with LDS declining as the budget increases. LoRIF improves more gradually but remains below \method{}. At larger storage budgets, \method{} also outperforms the other compression baselines when all methods use FP16 storage, indicating that its advantage is not solely attributable to one-bit quantization. These results show that the effectiveness of one-bit storage depends on the representation being quantized, and that \method{}'s advantage persists in a retraining-based evaluation.

\paragraph{Preservation of influence scores.}
To examine the effect of quantization directly, we compare influence scores obtained with FP16 and one-bit storage at the same projection dimension within each method. For each query, we compute the Pearson correlation between the two sets of scores across candidate training examples, then report the median correlation over queries in \Cref{tab:onebit_score_correlation}.

For \method{}, these correlations range from $0.890$ to $0.919$ across the evaluated configurations, indicating that the scores remain strongly correlated after quantization. LoGra and GraSS show substantially weaker agreement in several configurations. These correlations measure the preservation of score patterns rather than absolute score magnitudes, since Pearson correlation is unchanged by a positive rescaling or an additive shift. Together with the LDS results, they provide complementary evidence that \method{} preserves useful attribution information under one-bit storage.

\paragraph{A possible explanation.}
One possible explanation is that half-whitening makes the representation more compatible with scaled sign quantization. This quantization preserves coordinate signs but replaces their magnitudes with a common scale given by the mean absolute coordinate value. It therefore approximates vectors more accurately when their coordinate magnitudes are relatively uniform. By rescaling gradients using curvature information, half-whitening may reduce magnitude imbalances in the resulting coordinates and thereby lessen the distortion introduced by quantization. The relevant property is the relative variation in coordinate magnitudes, since uniformly rescaling a vector also rescales its quantization scale and leaves the relative reconstruction error unchanged.

\end{document}